\newaliascnt{eqfloat}{equation}
\newcommand*{\ORGeqfloat}{}
\let\ORGeqfloat\eqfloat
\def\eqfloat{%
  \let\ORIGINALcaption\caption
  \def\caption{%
    \addtocounter{equation}{-1}%
    \ORIGINALcaption
  }%
  \ORGeqfloat
}
\newcommand{\ZN}{{\mathbb{Z}_{n+1}\setminus\{0\}}}
\DeclareMathAlphabet{\mathbfsl}{OT1}{ppl}{b}{it} %{OT1}{cmr}{bx}{it}
\newcommand{\alf}{\mathrm{\bm{\alpha}}}
\theoremstyle{definition}
\newcommand{\be}[1]{\begin{equation}\label{#1}}
\newcommand{\ee}{\end{equation}}
\renewcommand{\leq}{\leqslant}
\renewcommand{\geq}{\geqslant}
\newcommand{\Cref}[1]{Co\-ro\-lla\-ry\,\ref{#1}}
\DeclareMathAlphabet{\mathcal}{OMS}{cmsy}{m}{n}
\newcommand{\one}{{\mathbbm 1}}
\def\x{\textbf{x}}
\def\v{\textbf{v}}
\def\a{\textbf{a}}
\def\g{\textbf{g}}
\def\h{\textbf{h}}
\newcommand{\stexp}{\mbox{$\mathbb{E}$}}   
\newcommand{\beq}{\begin{equation}}
\newcommand{\eeq}{\end{equation}}   
\newcommand{\bea}{\begin{eqnarray}}
\newcommand{\eea}{\end{eqnarray}}
\newtheorem{theorem}{Theorem}
\newtheorem{corollary}{Corollary}[theorem]
\title{Competitive Machine Learning: Best Theoretical Prediction vs Optimization}
\author{Amin Khajehnejad\footnote{3Red Trading Group/University of Michigan Ann Arbor.} and Shima Hajimirza\footnote{Texas A\&M University.}}
\date{}
\begin{document}
\maketitle

\begin{abstract}
Machine learning is often used in competitive scenarios: Participants learn and fit static models, and those models compete in a shared platform. The common assumption is that in order to win a competition one has to have the best predictive model, i.e., the model with the smallest out-sample error. Is that necessarily true?  Does the best theoretical predictive model for a target always yield the best reward in a competition? If not, can one take the best model and purposefully change it into a theoretically inferior model which in practice results in a higher competitive edge? How does that modification look like? And finally, if all participants modify their prediction models towards the  best practical performance, who benefits the most? players with  inferior models, or those with theoretical superiority? The main theme of this paper is to raise these important questions and propose a theoretical model to answer them. We consider a study case where two linear predictive models compete over a shared target. The model with the closest estimate gets the whole reward, which is equal to the absolute value of the target. We characterize the reward function of each model, and using a basic game theoretic approach, demonstrate that the inferior competitor can significantly improve his performance by choosing optimal model coefficients that are different from the best theoretical prediction. This is a preliminary study that emphasizes the fact that in many applications where  predictive machine learning is at the service of competition, much can be gained from practical (back-testing) optimization of the model compared to static prediction improvement. 
\end{abstract}

\section{Introduction}
\noindent Machine learning is often used in competitive scenarios: Participants learn and fit static models, and those models compete in a shared platform. The common assumption is that in order to win a competition one has to have the best learning model, i.e., the model with the smallest out-sample error. Is that necessarily true? The main theme of this paper is to raise this important question and propose a theoretical model to answer it. \\

\noindent In competitive machine learning, every player has a predictive model for the target, and the closest, the fastest or the most reliable model wins. What distinguishes this from static learning problems is that in a competition, target and rewards are shared among all players, and the distribution of rewards is not necessarily the same as that of  competitors'  model merits. For instance, the rules may dictate that absolute winners take all or most of the rewards, while everyone else gets nothing \cite{noe2005winner,dixit2015games}. In other words, a learning model that is really good but not quite the best can earn as much as a very poor model. The best example of this is in a one-time bidding game: The closest bid to the fair value of an asset wins the auction, and the reward is the underlying transaction \cite{kagel1993independent} (which may or may not be an appealing reward after all). The merit of a prediction can be assessed in the time domain as opposed to the space of target unit, or some combination of both. This is for example the case in High-Frequency electronic trading: The trader who has the fastest realization of a fair quote, and consequently the shortest reaction time, has the highest fill ratio, and is able to secure a profit above the transaction cost\cite{kearns2013machine}. Meanwhile the slow trader's fate is doomed at missed fills or adverse selection \cite{foucault1999order,aldridge2009high}. The extent of competitive machine learning is vast. Bidding and auction models are used in various on-line platforms where big data and AI play important roles, including e-commerce, travel reservation, on-line advertising and many more \cite{baluja2006browsing,schafer2001commerce}. Many marketing businesses revolve around competing over a shared pool of targeted customers' attentions (reward) by designing smarter recommendation algorithms (prediction models) \cite{ngai2009application,kim2005customer}. Ride sharing rivals use machine learning algorithms to predict surge times and provide recommendations or incentives to drivers for more rides and lower pickup times\cite{kamar2009collaboration}. Machine learning is now an essential part of  air traffic control, which is another example of an interactive platform of rewards (faster routs) and losses (delays) \cite{hansen2004genetic}. In fact, with the exception of  applications in natural sciences and bio-informatics, there is an element of competition in almost all practical machine learning problems.\\  

\noindent \textbf{Key  Questions}. Noting the essence of competitive machine learning, we now ask the previous question again. Does the best theoretical predictive model for a target always yield the best reward in a competition? If not, can one take the best model and purposefully change it into a theoretically inferior model which in practice results in a higher competitive edge? How does that modification look like? And finally, if all participants gear their prediction models towards the  best practical performance, who benefits the most? players with  inferior models, or those with theoretical superiority? These are some very important questions that have never been formally stated and studied to the best of the authors' knowledge. \\

\noindent \textbf{Our Contributions}. We provide a theoretical framework to  answer the above raised questions. We consider a simple case where two participants, A and B, compete over a shared target. Each participant has a prediction model for the target value, and the player with the closest prediction collects the entire reward, which is equal to the absolute value of the target.  The inferior player earns nothing. The predictions are based on underlying machine learning models that each player has independently built and trained. To make the study feasible, we focus on a simple case of linear prediction models: The target resource is the linear sum of $n$ underlying independent factors (features). Each player has access to a limited subset of those factors ($S_1,S_2$), and unlimited training data. We also make the assumption of ``linear knowledge'' regime, which means ${|S_1|}/{n}$,${|S_2|}/{n}$ and ${|S_1\cap S_2|}/{n}$ are all constants. Each player can obtain his best theoretical prediction for the target. We study this problem in a game theoretic framework. A ``strategy'' for each player is defined as a linear prediction model using the available features with a particular set of coefficients. We show that this is a constant-sum game where the expected reward of every player can be fully characterized and computed for any pair of selected strategies. We focus on a class of  strategies where the models' coefficients are ``symmetric'' with respect to both features sets. We show that for symmetric pure strategies, play A's reward is $R_1=c n$ where $c>0$ is a computable constant. When both players use their theoretical models,  $R_1=c_1 n$ for some other $c_1>0$. We further demonstrate that each player can secure a max-min reward over all symmetric strategies. This allows us to prove that the guaranteed reward for player A is $c^*n$ for some other constant $c^*$. We numerically demonstrate that in most cases $c^*/c_1 > 1$, if A is the inferior player (i.e. $|S_1| < |S_2|$). In other words, the inferior player can gain additional reward from practical (e.g. back-testing or real-world) optimization of his model coefficients,  and $c^*/c_1$ can be as high as 1.8. This is a fascinating counter intuitive result. It basically means that the inferior player can purposefully tweak his theoretical prediction model into another model that performs up to 1.8 times better in competitive learning, despite having a higher mean-square prediction error. Additionally, by looking into the chosen coefficients, we observe that the model tweaking (including both shared and unique features) always consists of magnifying the coefficients, though we cannot mathematically prove this latter point at present. \\

\section{Competitive Linear Models}

Suppose there are two players A and B who are competing to predict a target value $y$ and collect rewards as a result of their predictions. We model $y$ as the linear combination of $n$ features $x_1,...,x_n$:
\bea
y &=& \sum_{1\leq i\leq n}x_i  \nonumber \\
\x_i&\sim&\mathcal{N}(0,1)
\label{eq:main}
\eea  

\noindent where $\x_i$s are independent from  each other. The players each make an estimation about $y$ by using a subset of the features. In other words, A has access to a subset of features indexed in $S_1\subset \ZN$, and B has access to another subset of features indexed in $S_2\subset \ZN$. Without loss of generality, we assume that $|S_1| < |S_2|$, so player A is at a theoretical disadvantage. Let us call the estimations of the two players $y_1$ and $y_2$. The game rules dictates that whoever has the closest estimation wins all of the reward, which in this case is proportional to the absolute resource value $|y|$. Therefore, for a particular instance, the reward for the two players is as follows: 
\beq 
r_1=|y|\one\left(|e_1|<|e_2|\right),r_2=|y|\one\left(|e_1|\geq |e_2|\right),
\eeq 
\noindent where $e_1=y-y_1$ and $e_2=y-y_2$ are the prediction errors. The competition is only meaningful in a statistical sense, therefore the average rewards should be considered: 
\beq 
R_1=\stexp\{|y|\one\left(|e_1|<|e_2|\right)\}, R_2=\stexp\{|y|\one\left(|e_1|\geq|e_2|\right)\},
\eeq 

\noindent Note that  based on this definition $R_1+R_2=\stexp |y|$, therefore  we are dealing with a constant-sum game. The theoretical best estimations for $y_1$ and $y_2$ are the maximum likelihood estimations: 

\beq  
y^{(T)}_1 =  \sum_{i\in S_1} x_i,~ y^{(T)}_2 =  \sum_{i\in S_2} x_i \\
\eeq 

\noindent Assuming  players have had sufficiently large sets of training data, a linear regression will yield the above theoretical models. However, we consider the possibility that each player can choose a different prediction model. We limit the study to the set of all linear prediction models. A pair of strategies for A and B can be represented with a pair of coefficient vectors   $\alf_1=(\alpha_{11},\cdots,\alpha_{1n})$, and $\alf_2=(\alpha_{21},\cdots,\alpha_{2n})$, where the corresponding prediction models are: 
\beq
y_1 = \sum_{i\in S_1}\alpha_{1,i} x_i, ~ y_2 = \sum_{i\in S_2}\alpha_{2,i}x_i 
\eeq

% \begin{tabular}{|c||l|l|l||l|l|l|}
%   \hline
%   \multirow{2}{*}{Title} 
%       & \multicolumn{3}{c||}{Category~A} 
%           & \multicolumn{3}{|c|}{Category~B} \\             \cline{2-7}
%   & Item~1 & Item~2 & Item~3 & Item~1 & Item~2 & Item~3 \\  \hline
%   $X$ & 1 & 2 & 3 & 1 & 2 & 3 \\      \hline
%   $Y$ & 1 & 2 & 3 & 1 & 2 & 3 \\      \hline
% \end{tabular}

% \begin{tabular}{|l||*{5}{c|}}\hline
% \backslashbox{Room}{Date}
% &\makebox[3em]{5/31}&\makebox[3em]{6/1}&\makebox[3em]{6/2}
% &\makebox[3em]{6/3}&\makebox[3em]{6/4}\\\hline\hline
% Meeting Room &&&&&\\\hline
% Auditorium &&&&&\\\hline
% Seminar Room &&&&&\\\hline
% \end{tabular}

% \begin{table}
% \centering
% \captionof{table}{Your caption here}
% \begin{tabular}{|c|c|c|c|c|}\hline
% \diagbox[width=10em]{$x_1$}{$x_2$}&
% -1 & 0 & 1 \\ \hline
% -1 & $2\one(|\alpha_1 - 2| < 1)$ & $\one(|\alpha_1 - 1| < 1)$ & $0$  \\ \hline
% 0 & 0 & 0 & 0 \\ \hline
% 1 & 0 & $\one(|\alpha_1 - 1| < 1)$  & $2\one(|\alpha_1 - 2| < 1)$  \\ \hline
% \end{tabular}
% \end{table}

\newcommand{\var}{{\text{var}}}
\newcommand{\cov}{{\text{cov}}}
\newcommand{\non}{{\nonumber}}
\newcommand{\SAC}{S^c_1}
\newcommand{\SBC}{S^c_2}
\newcommand{\SABC}{S_1\cap S^c_2}
\newcommand{\SAB}{S_1\cap S_2}
\newcommand{\SACB}{S^c_1\cap S_2}
\newcommand{\SACBC}{S^c_1\cap S^c_2}

\noindent We also consider the \textit{linear knowledge} regime where the following holds for some constants $g_1,g_2,g_{12}$: 
\beq
|S_1|=g_1n,|S_2|=g_2n,|\SAB|=g_{12}n.
\eeq

\subsection{A Simple Example}
Consider the following simple example consisting of four variables: 
\begin{align} 
&y=x_1+x_2+x_3+x_4, \nonumber \\
&y_1=\alpha_1x_1+\alpha_2x_2, y_2=\beta_1x_2+\beta_2x_3 + \beta_3x_4
\label{eq:example}
\end{align} 
\noindent For further simplicity, let us also assume that $x_i$s can randomly take values in $\{0,1\}$ with equal probabilities instead of normal distribution. Player A has an inferior prediction model due to lower number of available features. It is possible to list out error values and  rewards for all 16 possible combinations of $x_i$s, as demonstrated in Table \ref{table:1}. 

\begin{table}[H]
\centering
\captionof{table}{Values of errors, target and reward for player A for different possible combinations of features values.}
\begin{tabular}{|c|c|c|c|c|}\hline
$x_1x_2x_3x_4$ & $e_1$ & $e_2$ & $y$ & $r_1$ \\ \hline \hline 
0000 & 0 & 0 & 0 & 0 \\ \hline
0001 & 1 & $1-\beta_3$ & 1 & $\one(1<|1-\beta_3|)$ \\ \hline
0010 & 1 & $1-\beta_2$ & 1 & $\one(1<|1-\beta_2|)$ \\ \hline
0011 & 2 & $2-\beta_2-\beta_3$ & 1 & $2\times\one(2<|2-\beta_2-\beta_3|)$ \\ \hline
0100 & $1-\alpha_2$ & $1-\beta_1$ & 1 & $\one(|1-\alpha_2|<|1-\beta_1|)$ \\ \hline
0101 & $2-\alpha_2$ & $2-\beta_1-\beta_3$ & 2 & $2\times\one(|2-\alpha_2|<|2-\beta_1-\beta_3|)$ \\ \hline
0110 & $2-\alpha_2$ & $2-\beta_1-\beta_2$ & 2 & $2\times\one(|2-\alpha_2|<|2-\beta_1-\beta_2|)$ \\ \hline
0111 & $3-\alpha_2$ & $3-\beta_1-\beta_2-\beta_3$ & 3 & $2\times\one(|3-\alpha_2|<|3-\beta_1-\beta_2-\beta_3|)$ \\ \hline
1000 & $1-\alpha_1$ & 1 & 1 & $\one(|1-\alpha_1| < 1)$ \\ \hline
1001 & $2-\alpha_1$ & $2-\beta_3$ & 2 & $2\times\one(|2-\alpha_1| < |2-\beta_3|)$ \\ \hline
1010 & $2-\alpha_1$ & $2-\beta_2$ & 2 & $2\times\one(|2-\alpha_1| < |2-\beta_2|)$ \\ \hline
1011 & $3-\alpha_1$ & $3-\beta_2-\beta_3$ & 3 & $3\times\one(|3-\alpha_1| < |3-\beta_2-\beta_3|)$ \\ \hline
1100 & $2-\alpha_1-\alpha_2$ & $2-\beta_1$ & 2 & $2\times\one(|2-\alpha_1-\alpha_2| < |2-\beta_1|)$ \\ \hline
1101 & $3-\alpha_1-\alpha_2$ & $3-\beta_1-\beta_3$ & 3 & $3\times\one(|3-\alpha_1-\alpha_2| < |3-\beta_1-\beta_3|)$ \\ \hline
1110 & $3-\alpha_1-\alpha_2$ & $3-\beta_1-\beta_2$ & 3 & $3\times\one(|3-\alpha_1-\alpha_2| < |3-\beta_1-\beta_2|)$ \\ \hline
1111 & $3-\alpha_1-\alpha_2$ & $4-\beta_1-\beta_2-\beta_3$ & 4 & $4\times\one(|4-\alpha_1-\alpha_2| < |4-\beta_1-\beta_2-\beta_3|)$ \\ \hline
%-1 & $2\one(|\alpha_1 - 2| < 1)$ & $\one(|\alpha_1 - 1| < 1)$ & $0$  \\ \hline
%1 & 0 & $\one(|\alpha_1 - 1| < 1)$  & $2\one(|\alpha_1 - 2| < 1)$  \\ \hline
\end{tabular}
\label{table:1}
\end{table}

\noindent From there, the expected reward of A is computable for every setting of $\alpha_i$s and $\beta_i$s, which is essentially the average of  column $r_1$ is table $\ref{table:1}$. The total expected reward $R_1+R_2$ in this case is $\stexp|y|=2$. When both players use their respective theoretical models, i.e. $\alpha_i=1~\forall 1\leq i \leq 2$ and $\beta_i=1~\forall 1\leq i \leq 3$, then $R_1=0.1875$ which is considerably lower than $R_2=1.8125$. For further simplicity, let us now assume that the players are only allowed to choose from strategies with equal coefficients, i.e. $\alpha_1=\alpha_2=\alpha$, and $\beta_1=\beta_2=\beta_3=\beta$. If player A unilaterally chooses a prediction model different than the theoretical model, then the reward profile looks as Figure \ref{fig:ex_reward} for different values of $\alpha$. The reward peak is very close to a fair split. To achieve this, A must use an exaggerated model with $1.5 < \alpha < 2$. An intuitive justification is that by choosing larger coefficients, the chance of conceding a smaller error value becomes lower for A. However, this will also increase the (conditional) chance of winning tail events where target values  are larger. 

\begin{figure}[H]
\centering
\includegraphics[width=0.6\textwidth]{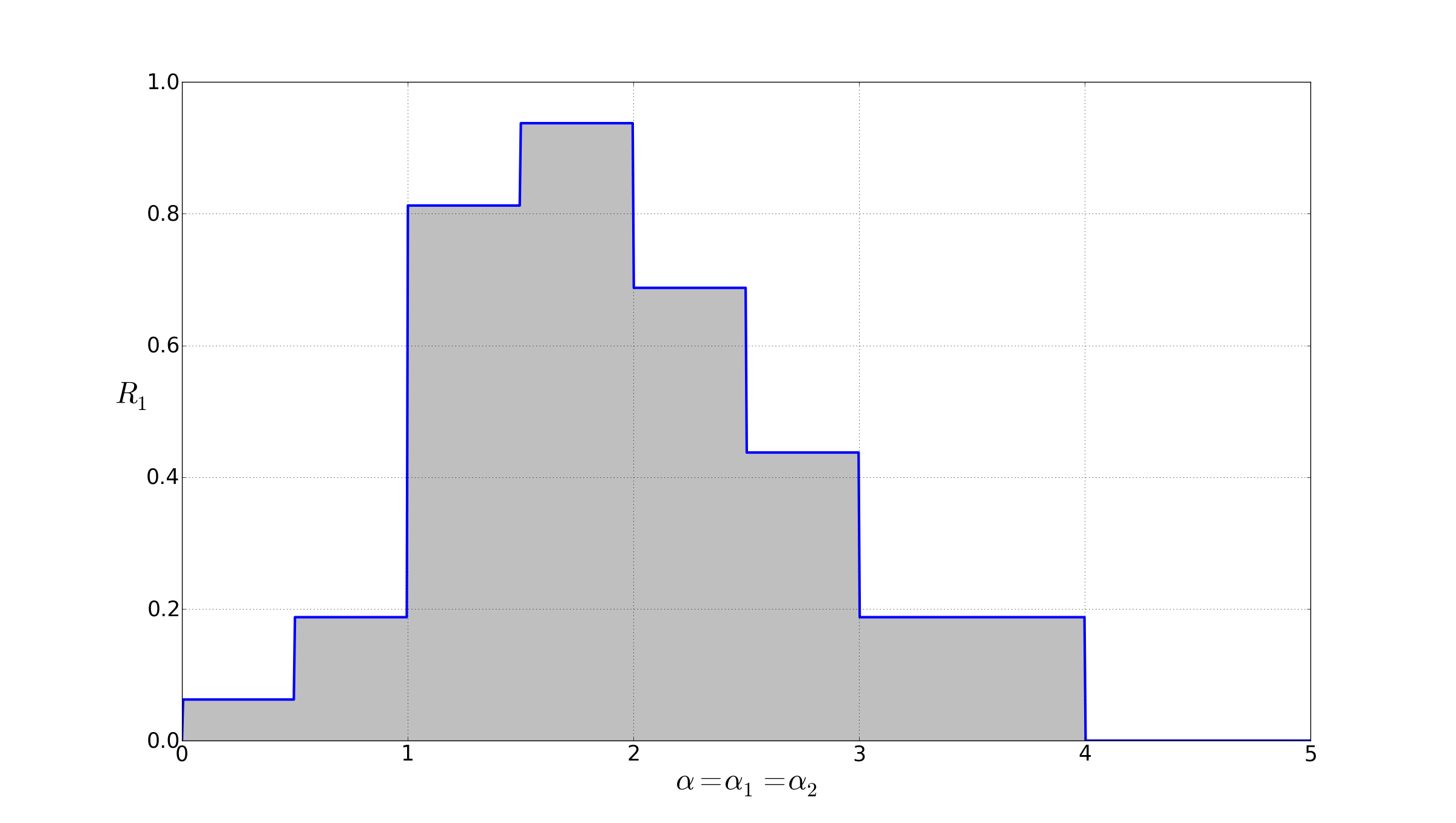}
\caption{Reward of player A if the opponent sticks to the best theoretical prediction model.}
\label{fig:ex_reward}
\end{figure}

\noindent Obviously, if B is smart he will not permit a significant loss and will try to adapt his strategy as well. From a game-theoretic perspective, there is a mixed-strategy Nash equilibrium \cite{gibbons1992primer}, which in this case is the solution of the minmax problem over all mixed strategies \cite{nisan2007algorithmic}. However, our focus here is solely on pure strategies, and from that angle, A  can at least guarantee the following maxmin value:
\beq 
\underset{\alpha}{\max}\underset{\beta}{\min}R_1(\alpha,\beta)
\eeq 

\noindent The heat map of $R_1$ as a function of $\alpha,\beta$ is illustrated in Figure \ref{fig:ex_reward_2d}. $\alpha=2.1$ leads to a guaranteed (maxmin) reward of $0.6875$ for player A, which is more than 3.5 times what he can earn with the theoretical model. 

\begin{figure}[H]
\centering
\includegraphics[width=0.6\textwidth]{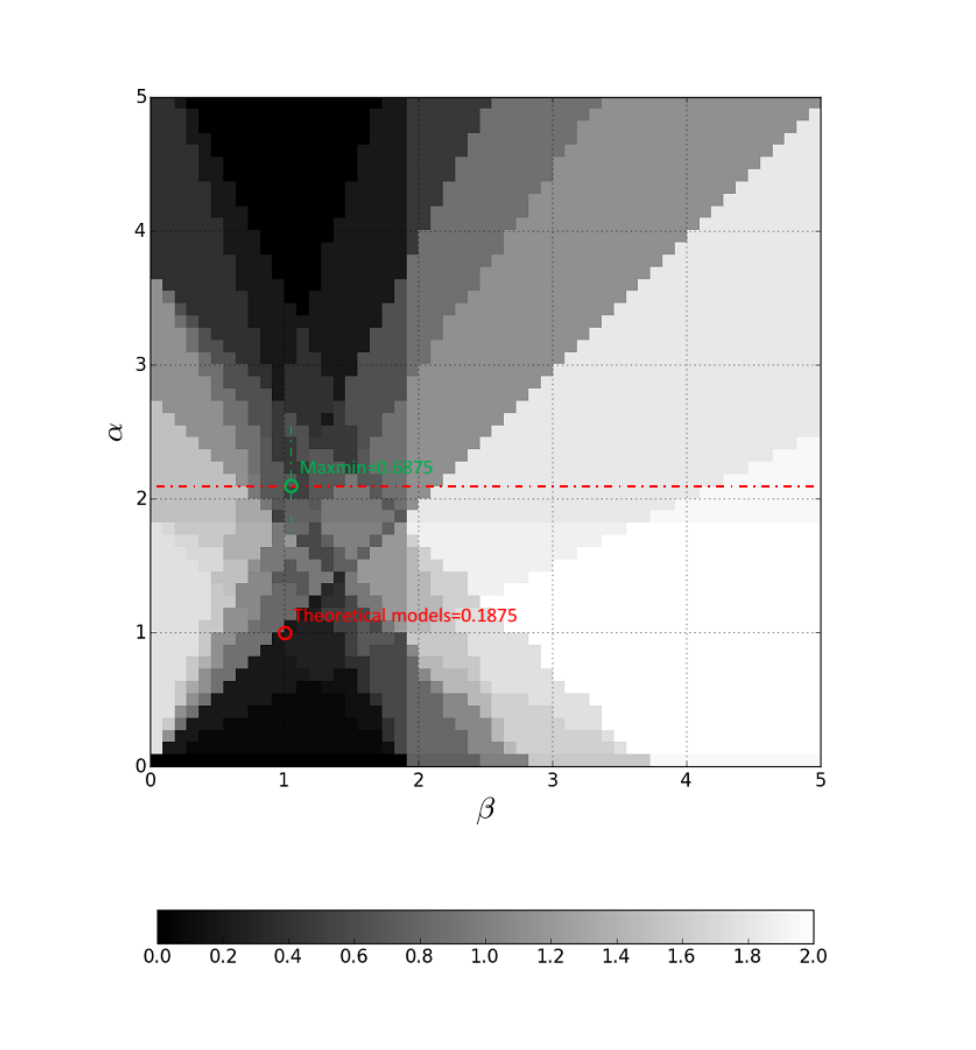}
\caption{Reward of player A as a function of coefficient values (strategies) $\alpha,\beta$.}
\label{fig:ex_reward_2d}
\end{figure}

\section{Derivations and Results}

We now turn our attention back to the generic model of (\ref{eq:main}) and study it more rigorously. Our first result states that for every strategy pair the reward is computable.  

\begin{theorem}
For every strategy pair $(\alf_1,\alf_2)$:  
\beq R_1=\int_{\x}{|x_1|\one\left(|x_2|<|x_3|\right)\Phi_{0,\Sigma}(x)d\x},\label{eq:R}\eeq 
\noindent where $\Phi_{0,\Sigma}$ is the pdf of the 3d-normal distribution with mean zero and covariance $\Sigma = 
\begin{bmatrix}
1 & v_{01} & v_{02} \\
v_{01} & v_{11} & v_{12} \\
v_{02} & v_{12} & v_{22} \\
\end{bmatrix}
$, where:

\noindent $v_{01} = |\SAC| + \underset{i\in S_1}{\sum}{(1-\alpha_{1,i})}$, 

\noindent $v_{02} = |\SBC| + \underset{i\in S_2}{\sum}{(1-\alpha_{2,i})}$,

\noindent $v_{12} = |\SAC \cap \SBC| + \underset{i\in \SAC \cap S_2}{\sum}{(1-\alpha_{2,i})} + \underset{i\in S_1 \cap \SBC}{\sum}{(1-\alpha_{1,i})} + \underset{i\in S_1 \cap S_2}{\sum}{(1-\alpha_{1,i})(1-\alpha_{2,i})}$,

\noindent $v_{11} = |\SAC| + \underset{i\in S_1}{\sum}{(1-\alpha_{1,i})^2}$,

\noindent $v_{22} = |\SBC| + \underset{i\in S_2}{\sum}{(1-\alpha_{2,i})^2}$. 

\label{theorem:R}
\end{theorem}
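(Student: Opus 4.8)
\noindent The plan is to observe that $y$, $e_1$, and $e_2$ are all \emph{linear} functionals of the independent standard normals $x_1,\dots,x_n$, so the triple $(y,e_1,e_2)$ is a jointly Gaussian random vector with mean zero. A zero-mean Gaussian vector is determined entirely by its covariance matrix, so writing $\Sigma=\mathrm{Cov}(y,e_1,e_2)$ we have $(y,e_1,e_2)\sim\mathcal N(0,\Sigma)$ with density $\Phi_{0,\Sigma}$. The reward $R_1=\stexp\{|y|\one(|e_1|<|e_2|)\}$ is a priori an expectation over the $n$-dimensional randomness, but since the integrand depends on the $x_i$ only through the three-tuple $(y,e_1,e_2)$, it is computed from the joint law of that tuple alone. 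Substituting $\Phi_{0,\Sigma}$ and relabelling the dummy coordinates $(x_1,x_2,x_3)\leftrightarrow(y,e_1,e_2)$ then yields the integral representation \eqref{eq:R} directly. Thus the entire content of the theorem reduces to computing the six distinct entries of $\Sigma$.

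To carry this out, I would first rewrite each error as a single linear form over all $n$ features. Since $y=\sum_{i=1}^n x_i$ and the coefficient of $x_i$ in $y_1$ is $\alpha_{1,i}$ for $i\in S_1$ and $0$ otherwise, we get
\beq
e_1=\sum_{i\in\SAC} x_i+\sum_{i\in S_1}(1-\alpha_{1,i})\,x_i,
\eeq
and symmetrically for $e_2$. Because the $x_i$ are independent with unit variance, each covariance is simply the sum over $i$ of the product of the coefficient of $x_i$ in the two relevant forms. The diagonal entries are then sums of squared coefficients, giving $v_{11}$ and $v_{22}$ immediately, while $\mathrm{Var}(y)$ gives the top-left entry; the mixed terms $v_{01}=\mathrm{Cov}(y,e_1)$ and $v_{02}=\mathrm{Cov}(y,e_2)$ follow the same way, using that the coefficient of $x_i$ in $y$ is $1$ for every $i$.

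The only entry requiring genuine care is the cross term $v_{12}=\mathrm{Cov}(e_1,e_2)$, and this is where I expect the bookkeeping to be most error-prone. The coefficient of $x_i$ in $e_1$ equals $1$ on $\SAC$ and $(1-\alpha_{1,i})$ on $S_1$, while its coefficient in $e_2$ equals $1$ on $\SBC$ and $(1-\alpha_{2,i})$ on $S_2$; the product therefore takes four different shapes according to which of the four disjoint regions $\SACBC$, $\SACB$, $\SABC$, $\SAB$ the index $i$ lies in. Partitioning $\{1,\dots,n\}$ into these four blocks and summing the corresponding products reproduces exactly the four-term expression for $v_{12}$. The remaining points are routine: the resulting $\Sigma$ is automatically symmetric positive semidefinite since it is a genuine covariance matrix, so $\Phi_{0,\Sigma}$ is a bona fide density (any degenerate case, where the three-tuple concentrates on a lower-dimensional subspace, being handled as a limit), which completes the identification of $R_1$ with \eqref{eq:R}.
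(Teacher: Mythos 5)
The paper itself offers no proof of Theorem~\ref{theorem:R}, so there is nothing to compare against; your argument is the natural (and surely intended) one, and it is essentially complete: $(y,e_1,e_2)$ is a linear image of the i.i.d.\ standard normals, hence zero-mean jointly Gaussian, the integrand depends on $\x$ only through this triple, and the covariance entries follow by summing products of coefficients over the partition $\SACBC$, $\SACB$, $\SABC$, $\SAB$. Your expressions for $v_{01},v_{02},v_{11},v_{22},v_{12}$ all check out against the statement.

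The one place where you should have been more careful is the top-left entry. You write that ``$\mathrm{Var}(y)$ gives the top-left entry'' as if this matched the stated matrix, but your own calculation gives $\mathrm{Var}(y)=\sum_{i=1}^{n}1=n$, whereas the theorem asserts that entry is $1$. With the off-diagonal entries written unnormalized (e.g.\ $v_{01}=|\SAC|+\sum_{i\in S_1}(1-\alpha_{1,i})$, which is of order $n$), the only consistent value for the $(1,1)$ slot is $n$; the value $1$ is correct only after dividing the whole matrix by $n$, which is the normalization used later in Corollary~\ref{corollary:U}. This is almost certainly a slip in the paper's statement rather than in your reasoning, but a blind proof should detect and flag it explicitly rather than silently assert agreement --- as written, your derivation proves the theorem with $\Sigma_{11}=n$, not the theorem as literally stated. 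Everything else, including the remark on degenerate (singular) $\Sigma$ handled as a limit, is fine.
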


\noindent The integral of (\ref{eq:U}) can be computed within any arbitrary precision by using the method of sigma points based on physical Hermite polynomials $H_n(x)$ \cite{vstecha2012unscented}: 
\begin{align} 
&\int_{\x}{|x_1|\one\left(|x_2|<|x_3|\right)\Phi_{0,\Sigma}(x)d\x} = \lim_{m\rightarrow \infty} \sum_{1\leq i,j,k \leq m} W^{(i,j,k)}\cdot|T^{(i,j,k)}_1|\cdot\one\left(|T^{(i,j,k)}_2| < |T^{(i,j,k)}_3|\right) \nonumber \\
&T^{(i,j,k)} = \sqrt{\Sigma} Z^{(i,j,k)}, W^{(i,j,k)}=w_iw_jw_k, Z^{(i,j,k)}=\left(\zeta_i,\zeta_j,\zeta_k\right), 
\end{align} 
\noindent where $\zeta^{(i)}$s are the roots of $H_m(x)$, and $w_i={m!}/(n^2 H_{m-1}^2(\zeta^{(i)}))$. \\

\noindent Using the  premises of elementary game theory, one can easily deduce that player A can secure the following reward:
\beq 
R_1^*=\underset{\alf_1}{\mathrm{max}}\underset{\alf_2}{\mathrm{min}} R_1.
\eeq 
\noindent However, despite computability of the reward function, any optimization over $\alf_1,\alf_2$ involves $O(n)$ variables. Therefore any attempt to derive explicit or computable relationships for $R_1^*$ is hopeless.  To facilitate this computation, we need to make the following definition: \\

\newcommand{\Hh}{{\mathcal{H}}}

\noindent \textbf{Definition} A strategy \normalfont{$\alf_1$} for player A is called \textit{model-symmetric} (or symmetric in brief) if 
{\normalfont 
\beq
\alpha_{1,i} = \left\{\begin{array}{c} a_{11}~\forall~i\in S_1\cap S^c_2 \\ a_{12}~\forall~i\in S_1\cap S_2
\end{array}\right.,
\eeq 
}
\noindent for some real scalars $a_{11},a_{12}$. We use the notation  $\alf_1=\h(a_{11},a_{12})$, and denote the set of all model-symmetric strategies for player A with $\Hh_1$.  A strategy \normalfont{$\alf_2$} for player B is called model-symmetric if 
{\normalfont 
\beq
\alpha_{2,i} = \left\{\begin{array}{c} a_{21}~\forall~i\in S_2\cap S^c_1 \\ a_{22}~\forall~i\in S_1\cap S_2
\end{array}\right.,
\eeq 

}
\noindent for some real scalars $a_{21},a_{22}$. We use the notation  $\alf_2=\h(a_{21},a_{22})$, and denote the set of all model-symmetric strategies for player B with $\Hh_2$. We also define $\Hh=\Hh_1\times\Hh_2$.\\

% \begin{align} 
% &|S_1|=g_1n,|S_2|=g_2n,|\SAB|=g_{12}n,\\
% &|\SACB|=(g_2-g_{12})n,|\SABC|=(g_1-g_{12})n,|\SACBC|=(1-g_1-g_2+g_{12})n,
% \end{align}

\noindent The next corollary states that for model-symmetric strategies the reward is $R_1=cn$, where $c$ is independent of $n$ and only depends on parameters of the linear knowledge regime. It follows immediately from Theorem \ref{theorem:R} and the definition of symmetric strategies. 

\begin{corollary}
{\normalfont If $\alf_1=\h(a_{11},a_{12})$ and $\alf_2=\h(a_{21},a_{22})$ are model-symmetric strategies, then $R_1=nU_{\g}(\a_1,\a_2)$}, where:

{\normalfont
\beq U_{\g}(\a_1,\a_2) = \int_{\x}{|x_1|\one\left(|x_2|<|x_3|\right)\Phi_{0,\Sigma}(x)d\x} 
\label{eq:U}
\eeq 
}
% {\normalfont
%  \beq U_{\g}(\a_1,\a_2) = 2|2\pi\Sigma|^{-0.5}\int_{-x_2}^{x_2}dx_3\int_{0}^{\infty}dx_2\int_{-\infty}^{\infty}{|x_1|\exp(-\x \Sigma^{-1}\x)dx_1} 
%  \eeq 
% }

\noindent where $\Phi_{0,\Sigma}$ is the pdf of 3d-normal distribution with mean zero and covariance $\Sigma = 
\begin{bmatrix}
1 & v_{01} & v_{02} \\
v_{01} & v_{11} & v_{12} \\
v_{02} & v_{12} & v_{22} \\
\end{bmatrix}
$, and:

\noindent $v_{01} = (1-g_1) + (1-a_{11})(g_1-g_{12}) + (1-a_{12})g_{12}$, 

\noindent $v_{02} = (1-g_2) + (1-a_{21})(g_2-g_{12}) + (1-a_{22})g_{12}$,

\noindent $v_{12} = (1-g_1-g_2+g_{12}) + (1-a_{11})(g_1-g_{12}) + (1-a_{21})(g_2-g_{12}) + (1-a_{12})(1-a_{22})g_{12}$,

\noindent $v_{11} = (1-g_1) + (1-a_{11})^2(g_1-g_{12}) + (1-a_{12})^2g_{12}$,

\noindent $v_{22} = (1-g_2) + (1-a_{21})^2(g_2-g_{12}) + (1-a_{22})^2g_{12}$, 

\noindent {\normalfont $\a_1=[a_{11},a_{12}],~\a_2=[a_{21},a_{22}],~\g=[g_1,g_2,g_{12}]$.}
\label{corollary:U}
\end{corollary}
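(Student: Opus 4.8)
The plan is to treat this as a direct specialization of Theorem~\ref{theorem:R}: the reward is already written there as a three–dimensional Gaussian integral whose only dependence on the strategies and on $n$ is carried by the five covariance entries $v_{01},v_{02},v_{12},v_{11},v_{22}$, each of which is a sum over an index subset of $\ZN$. For a model–symmetric pair $\alf_1=\h(a_{11},a_{12})$, $\alf_2=\h(a_{21},a_{22})$ these sums collapse, because $\alpha_{1,i}$ takes only the value $a_{11}$ on $\SABC$ and $a_{12}$ on $\SAB$, while $\alpha_{2,i}$ takes only $a_{21}$ on $\SACB$ and $a_{22}$ on $\SAB$. So the first move is to split each index set appearing in Theorem~\ref{theorem:R} into the symmetric blocks $\SACBC,\SACB,\SABC,\SAB$ and evaluate every sum as a cardinality times a constant.

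Next I would rewrite the two ``self'' terms, e.g. $\sum_{i\in S_1}(1-\alpha_{1,i})=(1-a_{11})|\SABC|+(1-a_{12})|\SAB|$ for $v_{01}$ (and the same with squares for $v_{11}$), and symmetrically for $v_{02},v_{22}$. The only entry needing real attention is the cross–covariance $v_{12}$: on $\SACB$ only B's coefficient survives, contributing $(1-a_{21})|\SACB|$; on $\SABC$ only A's survives, contributing $(1-a_{11})|\SABC|$; on $\SAB$ both act, giving the product $(1-a_{12})(1-a_{22})|\SAB|$; and $\SACBC$ contributes its full cardinality. Substituting the linear–knowledge counts $|\SAC|=(1-g_1)n$, $|\SABC|=(g_1-g_{12})n$, $|\SACB|=(g_2-g_{12})n$, $|\SAB|=g_{12}n$, and $|\SACBC|=(1-g_1-g_2+g_{12})n$ then turns each $v_{ij}$ into exactly $n$ times the normalized expression listed in the statement.

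The final step is to pull this common factor out of the integral. Writing $\Sigma=n\widetilde\Sigma$ with $\widetilde\Sigma$ the normalized covariance of the corollary, a scaling of the integration variables maps $\Phi_{0,\Sigma}$ to $\Phi_{0,\widetilde\Sigma}$ and, using that the indicator $\one(|x_2|<|x_3|)$ is scale–invariant while $|x_1|$ is homogeneous of degree one, extracts a single overall power of $n$; identifying the remaining integral with $U_{\g}(\a_1,\a_2)$ of \eqref{eq:U} then gives $R_1=nU_{\g}(\a_1,\a_2)$ with $c=U_{\g}(\a_1,\a_2)$ depending only on $\g,\a_1,\a_2$. The argument is essentially bookkeeping on top of Theorem~\ref{theorem:R}, so no step is genuinely hard; the two places to be careful are the four–way block decomposition of $v_{12}$ (keeping track of which player's coefficient is present on each block) and pinning down the overall power of $n$ from the homogeneity/scaling, which is the one substantive point behind the ``immediate'' claim.
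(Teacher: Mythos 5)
Your overall route is exactly the one the paper intends: the paper offers no argument beyond ``follows immediately from Theorem~\ref{theorem:R} and the definition of symmetric strategies,'' and your block decomposition supplies the missing bookkeeping correctly. Splitting into $\SACBC,\SACB,\SABC,\SAB$, evaluating each sum as a constant times a cardinality, and substituting $|\SAC|=(1-g_1)n$, $|\SABC|=(g_1-g_{12})n$, $|\SACB|=(g_2-g_{12})n$, $|\SAB|=g_{12}n$, $|\SACBC|=(1-g_1-g_2+g_{12})n$ does turn every covariance entry of Theorem~\ref{theorem:R} into $n$ times the corresponding entry in the corollary, including the four-way decomposition of $v_{12}$.

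The gap is in the final scaling step, which you yourself flag as ``the one substantive point'' but then do not actually carry out. If $\Sigma=n\widetilde\Sigma$, the change of variables is $\x=\sqrt{n}\,\z$ with $\z\sim\mathcal{N}(0,\widetilde\Sigma)$, so the scale-invariant indicator contributes nothing and the degree-one factor $|x_1|$ contributes $\sqrt{n}$, not $n$: your own argument delivers $R_1=\sqrt{n}\,U_{\g}(\a_1,\a_2)$, not $R_1=nU_{\g}(\a_1,\a_2)$. A sanity check confirms that $\sqrt{n}$ is the correct order: $R_1\leq \stexp|y|=\sqrt{2n/\pi}$, so $R_1=cn$ with $c>0$ fixed is impossible for large $n$. (The discrepancy traces back to the source: Theorem~\ref{theorem:R} lists the $(1,1)$ entry of $\Sigma$ as $1$ while its other entries are unnormalized $O(n)$ sums, so $\Sigma$ there cannot literally equal $n\widetilde\Sigma$ either; the natural reading is $\Sigma_{11}=\var(y)=n$, which is what you implicitly used.) So as a proof of the statement as literally written, the proposal asserts a power of $n$ that its own computation contradicts; you should either derive the factor explicitly and conclude $R_1=\sqrt{n}\,U_{\g}$, noting that the exponent in the corollary (and in the paper's ``$R_1=cn$'' claims downstream) appears to be a scaling error, or explain what nonstandard normalization of Theorem~\ref{theorem:R} would make the factor $n$ correct. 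The ratio $U^{*}_{\g}/U^{\one}_{\g}$ that the paper ultimately cares about is unaffected either way.
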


\textbf{ }\\

\begin{theorem}
$
R_1^*\geq \underset{(\alf_1,\alf_2)\in\Hh}{ \underset{\alpha_1}{\mathrm{max}}\underset{\alpha_2}{\mathrm{min}}}~R_1(\alpha_1,\alpha_2)=n \underset{\a_2\in\mathbb{R}^2}{\mathrm{max}}\underset{\a_1\in\mathbb{R}^2}{\mathrm{min}}~U_{\g}(\a_1,\a_2).
$ 
\label{theorem:symmetric}
\end{theorem}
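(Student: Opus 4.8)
The statement decomposes into an exact reparametrization identity (the equality) and a game-theoretic lower bound (the inequality). The identity is essentially a restatement of Corollary~\ref{corollary:U}. The assignments $\a_1=[a_{11},a_{12}]\mapsto\alf_1=\h(a_{11},a_{12})$ and $\a_2=[a_{21},a_{22}]\mapsto\alf_2=\h(a_{21},a_{22})$ are bijections between $\R^2$ and $\Hh_1$, $\Hh_2$ respectively, so optimizing over $\Hh=\Hh_1\times\Hh_2$ is the same as optimizing over $(\a_1,\a_2)\in\R^2\times\R^2$. Corollary~\ref{corollary:U} gives $R_1(\alf_1,\alf_2)=n\,U_\g(\a_1,\a_2)$ on $\Hh$, and since the positive constant $n$ commutes with both $\min$ and $\max$, the restricted max--min of $R_1$ equals $n$ times the corresponding max--min of $U_\g$. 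Attainment of the optima can be arranged by a routine coercivity argument: as any of B's coefficients grows the variance entry $v_{22}$ diverges and A collects essentially all of the reward, while as A's coefficients grow $v_{11}$ diverges and A collects nothing, so both players' optima lie in a bounded region.

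For the inequality I would chain two steps,
\[ R_1^*=\max_{\alf_1}\min_{\alf_2}R_1\;\geq\;\max_{\alf_1\in\Hh_1}\min_{\alf_2}R_1\;=\;\max_{\alf_1\in\Hh_1}\min_{\alf_2\in\Hh_2}R_1, \]
where the unrestricted $\min_{\alf_2}$ ranges over all of B's coefficient vectors. The first step is the elementary monotonicity of a max--min value under shrinking the maximizing player's feasible set: confining A to symmetric strategies while leaving B completely free can only lower what A can guarantee, hence $R_1^*$ dominates it. The real work is the second step, which asserts that replacing B's free minimization by a minimization over symmetric strategies does not change the value. It suffices to prove the following symmetrization claim: for every fixed symmetric $\alf_1\in\Hh_1$, player B admits a symmetric best response, i.e. $\min_{\alf_2}R_1(\alf_1,\alf_2)=\min_{\alf_2\in\Hh_2}R_1(\alf_1,\alf_2)$.

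The proof of the symmetrization claim would lean on the structural content of Theorem~\ref{theorem:R}: the reward sees B's vector $\alf_2$ only through the three covariance entries $v_{02},v_{12},v_{22}$. When $\alf_1$ is symmetric, a direct substitution into the formulas of Theorem~\ref{theorem:R} shows that $v_{02}$ and $v_{12}$ depend on $\alf_2$ only through the two block sums $P=\sum_{i\in\SAB}(1-\alpha_{2,i})$ and $Q=\sum_{i\in\SACB}(1-\alpha_{2,i})$, whereas $v_{22}$ depends on $\alf_2$ through the corresponding block sums of squares. Given an arbitrary $\alf_2$, I would replace it by the symmetric vector $\bar\alf_2\in\Hh_2$ that is constant on $\SAB$ and on $\SACB$ with the same $P$ and $Q$. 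This leaves $v_{02}$ and $v_{12}$ (and the A-only entries $v_{01},v_{11}$) untouched, while the power-mean inequality guarantees $v_{22}(\bar\alf_2)\le v_{22}(\alf_2)$, the symmetric choice attaining the minimum possible $v_{22}$ for its values of $P,Q$.

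What remains --- and this is where I expect the difficulty to concentrate --- is to show that $R_1$ is monotone nondecreasing in $v_{22}$ when $v_{01},v_{02},v_{11},v_{12}$ are held fixed; granting this, $R_1(\alf_1,\bar\alf_2)\le R_1(\alf_1,\alf_2)$, so symmetrizing never hurts B and the symmetric minimum coincides with the free minimum. The intuition is transparent: fixing the covariances of $e_2$ with $y$ and $e_1$ while raising $\var(e_2)$ is exactly adding independent Gaussian noise to B's error, which inflates $|e_2|$, makes $\{|e_1|<|e_2|\}$ more likely, and hence raises A's reward. The subtlety is that the conditional win probability $\mathbb{P}(|e_2|>|e_1|\mid y,e_1)$ is \emph{not} pointwise monotone in the injected variance, so one must show the $|y|$-weighted average is monotone; I would attempt this either by differentiating the Gaussian integral of Theorem~\ref{theorem:R} with respect to $v_{22}$ and signing the result, or by a Slepian-type Gaussian comparison that couples B's error across the two variance levels. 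As a fallback, the same symmetric $\bar\alf_2$ arises from averaging $\alf_2$ over the group of within-block feature permutations, under which $R_1$ is invariant; so it would alternatively suffice to establish convexity of $\alf_2\mapsto R_1(\alf_1,\alf_2)$ along these permutation orbits and apply Jensen's inequality.
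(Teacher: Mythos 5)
Your proposal follows essentially the same route as the paper: restrict player A to symmetric strategies (which can only lower the max--min), then argue that against a symmetric $\alf_1$ player B has a symmetric best response because within-block averaging of $\alf_2$ preserves $v_{02}$ and $v_{12}$ while weakly decreasing $v_{22}$ --- the paper does this by pairwise averaging plus a contradiction argument, you do it in one shot via the power-mean inequality, but these are the same symmetrization. The one step you explicitly flag as unresolved, namely that $R_1$ is nondecreasing in $v_{22}$ when the other covariance entries are held fixed, is precisely the step the paper also leaves unproven, disposing of it with the bare assertion that a model with the same target covariance but smaller error variance is ``equal or superior for player B''; your writeup is, if anything, more candid than the paper about the fact that this weighted-average monotonicity (which, as you correctly note, fails pointwise in the conditional win probability) is the real content of the symmetrization claim.
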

\begin{proof}
we only need to show that for a symmetric $\alf_1$, the minimizer  $\alf_2$ of $R(\alf_1,\alf_2)$ must be symmetric too. if $i$ and $j$ are two indices in $\SACB$, then any change in the values of $\alpha_{2,i}$ and $\alpha_{2,j}$ only effect $v_{0,2}=\cov(y,e_2)$ and $v_{2,2}=\var(e_2)$ (recall the notations of model covariances in Theorem \ref{theorem:R}). Let $\alf^*_2$ be the minimizer of $R_2$ and $v^*_{0,2}$, and $v^*_{22}$ be the resulting values for $v_{0,2}$ and $v_{22}$. Define a new vector $\alf_2$ which is equal to $\alf^*_2$ in every coordinate except $i$ and $j$, where $\alpha_{2,i}=\alpha_{2,j}=(\alpha^*_{2,i} + \alpha^*_{2,j})/2$. We can conclude that:
\begin{align}
&v_{0,2}  - v^*_{0,2} = 0, \\
&v_{2,2}  - v^*_{2,2} = 2(1-(\alpha^*_{2,i} + \alpha^*_{2,j})/2)^2 - (1-\alpha^*_{2,i})^2 - (1-\alpha^*_{2,j})^2 = -(\alpha^*_{2,i}-\alpha^*_{2,j})^2/2 \leq 0.  
\end{align} 
This means that $\alf_2$ results in a model for B which has the same covariance with the target, but smaller or equal error variance, while everything else remains the same. Such a model is equal or superior for player B. Therefore $R_1$ gets smaller which contradicts the choice of $\alf^*_2$, unless $\alpha^*_{2,i}=\alpha^*_{2,i}$.  
\noindent Similarly for two indices $i,j$ in $S_1\cap S_2$, a similar transformation of $\alf^*$ results in a model with smaller or equal $v_{2,2}$ and identical $v_{0,2}$. In this case since $\alf_1$ has only one unique value, $v_{1,2}$ will remain the same as well. Thus a similar argument holds, and the values of $i$ and $j$ coordinates in $\alf^*_2$ are identical.  
\end{proof}

\textbf{ }\\

\noindent Based on theorem \ref{theorem:symmetric}, we can use a computational optimization over four variable $a_1,a_2,b_1,b_2$ to obtain a lower bound for the guaranteed rate of the inferior player $R_1^*$. For simplicity, let us define: 

\beq U^{\one}_{\g} = U_{\g}\left((1,1),(1,1)\right),~U^*_{\g} =  \underset{\a_2\in\mathbb{R}^2}{\mathrm{max}}\underset{\a_1\in\mathbb{R}^2}{\mathrm{min}}~U_{\g}(\a_1,\a_2).\eeq 

\noindent $U^{\one}_{\g}$ is the (normalized) reward of player A when both players use their theoretical prediction models, while $U^*_{\g}$ is the conservative guaranteed reward if player A chooses an optimally distorted model. The ratio $U^{*}_{\g}/U^{\one}_{\g}$ is therefore the game-theoretic notion of ``gain'' that player A can achieve in this competition by optimizing his model's coefficients. Based on Theorem \ref{theorem:symmetric} and Corollary \ref{corollary:U}, a lower bound for this gain is completely computable for every regime vector $\g$. We have computed and will present this for a special case where $g_{12}=g1g2$. This case presents a ``typical'' subsets intersection for large $n$ if $S_1$ and $S_2$ are selected independently and uniformly. The 2d plots of $U^{\one}_{\g}$ and $U^{*}_{\g}/U^{\one}_{\g}$ as functions of $g_1,g_2$ are depicted in Figures \ref{fig:a} and \ref{fig:b} respectively.  Note that the gain can be as large as 1.8 for player A. In general, the more inferior A (the smaller g1/g2), the larger the gain.  

\begin{figure}[H] \centering
    \begin{subfigure}[]{0.5\linewidth}
        \includegraphics[width=0.8\textwidth]{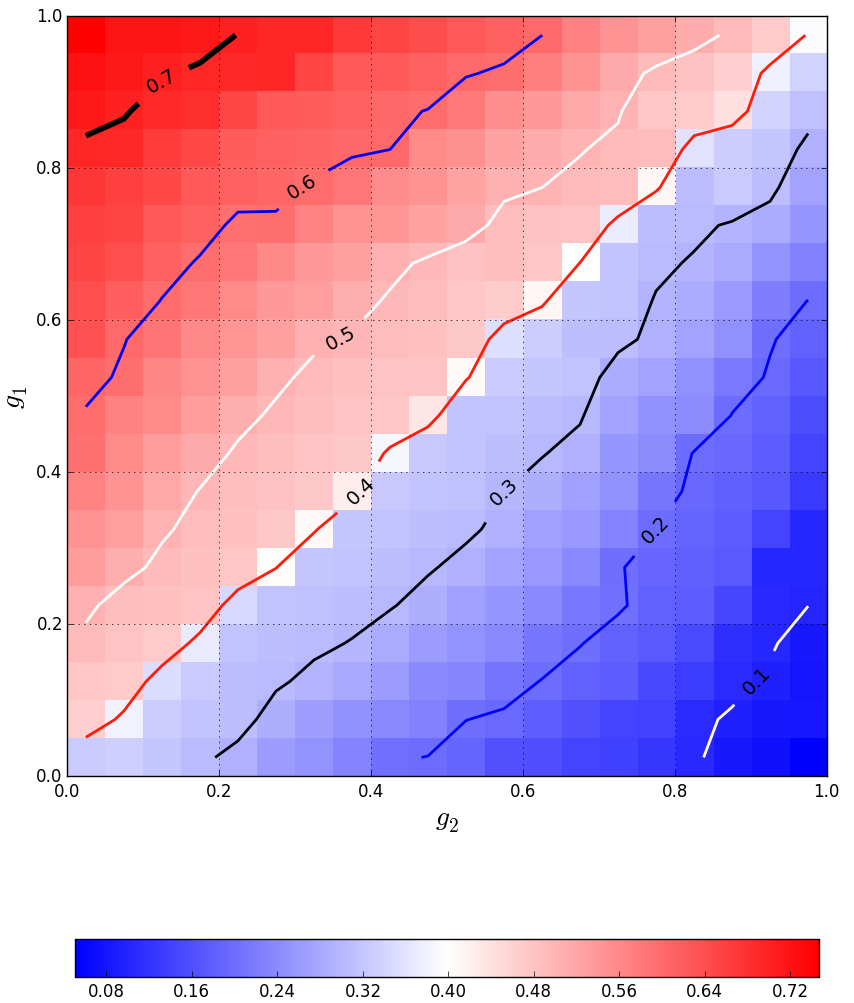}
        \caption{$U_{\g}^{\one}$.}
        \label{fig:a}
    \end{subfigure}~
    \begin{subfigure}[]{0.5\linewidth}    
        \includegraphics[width=0.8\textwidth]{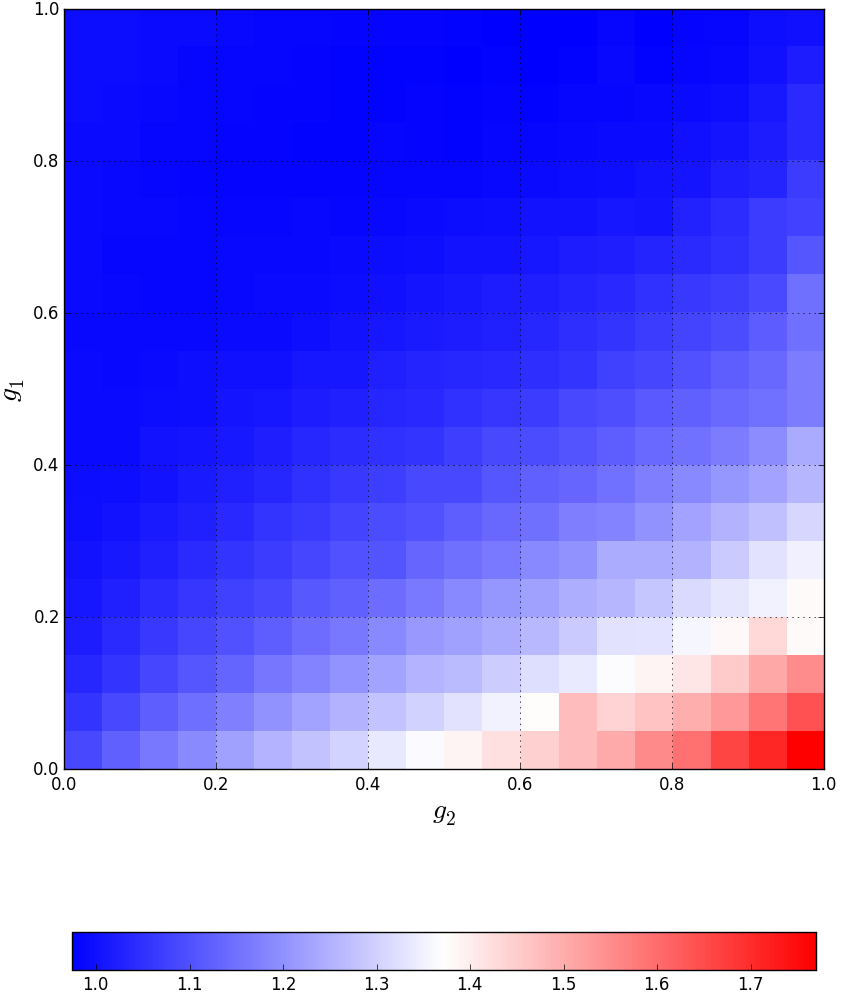}
        \caption{${U_{\g}^{*}}/{U_{\g}^{\one}}$.}
        \label{fig:b}    
    \end{subfigure} 
    \caption{Scaled reward of player A and the lower bound on achievable gain by optimizing the model coefficient for $g_{12}=g_1g_2$.}
\end{figure}

\section{Conclusion}
We introduced a game theoretic framework for studying the performance of predictive linear models in competitions. We focused on a two player competition, and demonstrated that if both players put aside the theoretical predictions of the target, and optimize their corresponding models based on the practical reward, then the player with the inferior prediction gains extra rewards. The reward can be significant depending on the players' knowledge regimes. This underlines the value of model optimization beyond the theoretical prediction. A more rigorous future study shall consider the following further directions: 1) devising an analytical study for the trade-off between cost of feature (knowledge) expansion and reward enhancement,  1) obtaining theoretical characterizations of optimal coefficients for each player, 2) studying mixed strategies and minmax equilibrium, which leads to suggestions for model combination (e.g., alpha combination in trading and investment), 3) studying the problem in a dynamic setup where players adaptively optimize their models. One can potentially characterize the convergence of the system and propose efficient learning/optimization algorithms for the latter problem. 4) Studying a multi-player game where features access has a particular distribution among a large number of players. Understanding the optimization gain of players as a function of where in the spectrum of knowledge they stand is a challenging and important practical problems that conforms to asymptotic cases of real world competitive applications.

%\bibliographystyle{IEEEtran}
%\bibliography{refs}

% Generated by IEEEtran.bst, version: 1.14 (2015/08/26)

\end{document}